\documentclass[a4paper,11pt]{article}
\usepackage{amsmath,amssymb,amsthm}
\usepackage{amsfonts}
\usepackage{eucal}
\numberwithin{equation}{section}

\renewcommand{\a}{\alpha}
\renewcommand{\b}{\beta}

\renewcommand{\d}{\delta}
\newcommand{\e}{\varepsilon}
\newcommand{\f}{\varphi}
\newcommand{\g}{\psi}

\renewcommand{\l}{\lambda}

\newcommand{\s}{\sigma}
\newcommand{\x}{\xi}

\newcommand{\re}{\mathbb{R}}
\newcommand{\ze}{\mathbb{Z}}

\def\pa{\partial}

\newcommand{\fourier}{\mathcal{F}}

\newcommand{\bigpare}[1]{\bigl(#1\bigr)}
\newcommand{\biggpare}[1]{\biggl(#1\biggr)}

\newcommand{\biggbra}[1]{\biggl\{#1\biggr\}}

\newcommand{\norm}[1]{\| #1 \|}
\newcommand{\bignorm}[1]{\bigl\| #1 \bigr\|}

\newcommand{\jap}[1]{\langle #1 \rangle}

\newtheorem{thm}{Theorem}
\newtheorem{lem}[thm]{Lemma}

\theoremstyle{definition}

\theoremstyle{remark}
\newtheorem{rem}{Remark}[section]

\title{Mathematical analysis of one-layer neural network with fixed biases, 
a new activation function and other observations}

\author{Fabricio Maci\`a\footnote{%
M$^2$ASAI, Universidad Polit\'ecnica de Madrid. ETSI Navales. Avda. de la Memoria, 4. 28040 Madrid, Spain. 
E-mail address: {\tt fabricio.macia@upm.es}}
\and Shu Nakamura\footnote{%
Department of Mathematics, Faculty of Sciences, Gakushuin University,
1-5-1, Mejiro, Toshima, Tokyo, 
171-8588 Japan
E-mail address: {\tt shu.nakamura@gakushuin.ac.jp}
}
}

\begin{document}
\maketitle

\begin{abstract}
We analyze a simple one-hidden-layer neural network with ReLU activation functions and fixed biases, 
with one-dimensional input and output.
We study both continuous and discrete versions of the model, and we rigorously prove the convergence of 
the learning process with the $L^2$ squared loss function and the gradient descent procedure. 
We also prove the spectral bias property for this learning process.

Several conclusions of this analysis are discussed; in particular, regarding the structure and properties that activation functions should possess, as well as the relationships between the spectrum of certain operators and the learning process. 
Based on this, we also propose an alternative activation function, the full-wave rectified exponential function (FReX),
and we discuss the convergence of the gradient descent with this alternative activation function. 
\end{abstract}

%%%%%%%%%%%%%%%%%%%%%%%
\section{Introduction}

We consider simple one-hidden-layer ReLU neural networks with one dimensional input and one dimensional output, 
where the first layer is fixed; namely, the weight is the identity, and the biases are preset. 
We consider both continuous models and discrete models (see the definitions below). 
The continuous model is essentially equivalent to the usual (continuous) neural network, whereas the discrete
model is different but closely related to the usual neural network. 
We show that learning with the standard $L^2$ loss function and the gradient descent procedure 
converges to the unique minima (without random initial conditions). In particular, we rigorously prove the spectral bias 
property for these models. Also, this analysis suggests that the ReLU activation is effective 
(at least partly) because it is a fundamental solution to the one-dimensional Laplacian. We then propose a new activation 
function, FReX, which is also a fundamental solution to a second order differential operator, 
but it also decays exponentially at infinity; hence, in particular, it is integrable and almost localized. 
We show that we can prove the same results as above for the models with this new activation function. 

In our models, the output is linear with respect to the trained parameters, and hence we can easily expect 
the learning process to be well-behaved, whereas these models are equivalent to or similar to standard 
neural network models. In particular, they have total representability of the $L^2$ functions for the 
continuous model and a natural class of representable functions for the discrete model. 
In these simple models, we can rigorously prove and analyze the convergence of the learning and spectral bias. We can also analyze the role of activation. 
Thus, we hope our simple models may play a role in understanding the 
basic mechanisms of neural networks in general. 

We note that the higher-dimensional models can be constructed and analyzed, but they are considerably 
more complicated, and we address this problem in a forthcoming paper.

Our starting point is the standard two-layer neural network: 
\[
y = \sum_{j=0}^{N-1} W^{(2)}_j \mathrm{ReLU}(W^{(1)}_j x +b_j)+\b,\quad  x\in\re, 
\]
where $N\in \mathbb{N}$, $W^{(k)}_j$ ($k=1,2,j=0,\dots, N-1$) are the weights, and 
$b_j$ ($j=0,\dots, N-1$) and $\b$ are biases. 
Here, $x\in\re$ is the input, or the independent variable, 
and $y\in\re$ is the output, or the prediction. 
$\mathrm{ReLU}$ is the rectified linear unit function, and we will denote it by $Y$ for simplicity:
\[
Y(z):= \mathrm{ReLU}(z)= \max(0, z), \quad z\in\re. 
\]
The above neural network admits the natural continuous analogue:
\begin{equation}\label{eq-fullmodel}
y (x)=\int_{-\infty}^{+\infty} \g^{(2)}(z) Y(\g^{(1)}(z)x+b(z)) dz + \b, \quad x\in\re, 
\end{equation}
where the weights $W^{(1)}_j$ and $W^{(2)}_j$ are replaced by $\g^{(1)}(z)$ and $\g^{(2)}(z)$, respectively, 
and the biases $b_j$ are replaced by a bias function $b(z)$. 

Our first observation, detailed in Section \ref{s-modelred}, is that the mathematical analysis of \eqref{eq-fullmodel} can be reduced to that of a simpler model of the form
\begin{equation}\label{eq-SimpModel}
    f(x) = \int_{-\infty}^{+\infty} \g(z) Y(x-z) dz +\b , \quad x\in\re. 
\end{equation}
We may interpret this as a one-hidden-layer (continuous) neural network with prescribed first-layer biases. There is only 
one weight $\g(z)$ and one bias $\b$ (they correspond to the weight and the bias in the second layer of the original formulation).  
The bias in the first layer is now considered an independent variable itself; hence, the terminology \textit{fixed} biases (see Section~3). 

Our goal in this article is to exploit this reduction to a simpler model in order to obtain insight into relevant aspects of the two-layer model. More specifically, we are interested in clarifying the role of the activation function in the representability aspect, the convergence of the associated gradient descent iteration, as well as the spectral bias analysis of the model.

Our second main observation is that the ReLU activation function satisfies: 
\begin{equation}\label{eq-FundamentalSolution}
Y''(z)=\d(z), 
\end{equation}
\textit{i.e.}, $Y$ is a fundamental solution to the one dimensional Laplace operator. 
Here $\d(z)$ is the one dimensional Dirac delta function, and the derivative is computed in the 
distribution sense on $\re$. At least formally, differentiating twice \eqref{eq-SimpModel} yields 
\[
f''(x) = \int_{-\infty}^{+\infty} \g(z) Y''(x-z)dz = \int_{-\infty}^{+\infty} \g(z)\d(x-z)dz = \g(x)
\]
by virtue of \eqref{eq-FundamentalSolution}. 
Thus, for a given $C^2$-class function  $f(x)$, the function $\g(x)=f''(x)$ is the {\em unique}\/ solution (optimal weight) for the one-hidden-layer neural network problem.\footnote{These assertions can be converted in rigorous mathematical statements provided one interprets the integrals in the sense of distributions and assumes polynomial growth at infinity both for $f$ and $\psi$. In Section \ref{s-simplem} a rigorous analysis is performed when the real line is replaced by a finite interval.}
After introducing a suitable loss function, the learning process will converge to its unique minimum, as it is proved in Section \ref{s-simplem}, and a discrete version is examined in Section \ref{s-discrete}.  

This suggests two important consequences. On one hand, property \eqref{eq-FundamentalSolution} implies that any sufficiently smooth function is representable; this relies on the singular second derivative of 
$Y$ at the origin. On the other hand, the uniqueness of this representation implies that the network is {\em exactly}-parametrized, 
in contrast to the usual neural networks, which are almost always very much over-parametrized.  

These remarks, which are elaborated in Section~4, allow us to conjecture that activation functions with the property of being fundamental solutions to a second-order differential operator should give rise to networks which possess good representability and convergence properties. 

Based on this, we propose an alternative activation function: the full-wave rectified exponential function (FReX)
\[
\mathrm{FReX}(x)=e^{-|x|}, \quad x\in\re.
\]
It is the unique fundamental solution of the second order operator $\frac12\bigpare{-\frac{d^2}{dx^2}+1}$; in other words, it satisfies
\[
\frac12\biggpare{-\frac{d^2}{dx^2}+1}\mathrm{FReX}(x)
=\d(x)
\]
in the distributional sense. 
In Section~5, we discuss the convergence of the learning process with this alternative activation function, as well as a spectral bias analysis.

The literature on neural networks and deep learning is vast, and we do not attempt a comprehensive review here. For general background, we refer to the textbooks of Prince \cite{Prince2023} and Goodfellow, Bengio, and Courville \cite{Goodfellowetal}. For topics closer to the present paper, we also mention Sonoda and Murata on unbounded activation functions, including ReLU \cite{SonodaMurata2017}, Antil et al. on fixed bias configurations \cite{AntilBrownetal}, and Dubey, Singh, and Chaudhuri for a survey of activation functions in deep learning \cite{DubeySinghChaudhuri}. Our arguments rely on methods from functional analysis and partial differential equations; for background in these areas, we refer, for example, to \cite{ReedSimon,FollandPDE}.

\subsection*{Acknowledgments}Part of this work was developed while F.M. was visiting Gakushuin University in Spring 2025 and Winter 2026; he wishes to thank this institution for its support and warm hospitality. F.M.'s research is supported by grants PID2021-124195NB-C31 and PID2024-158664NB-C21 from Agencia Estatal de Investigación (Spain). 

%
%%%%%%%%%%%%%%%%%%%%%%%
\section{The continuous model}\label{s-simplem}

\subsection{One-hidden-layer reduction}\label{s-modelred}

Our key observation is that, from the point of view of mathematical analysis, the two-layer neural network 
\begin{equation}\label{eq-twos}
    y(x) =\int_{-\infty}^{+\infty} \g^{(2)}(z) Y(\g^{(1)}(z)x+b(z)) dz + \b, \quad x\in\re, 
\end{equation}
with ReLU activation function $Y$ can be reduced to a one-hidden-layer network with fixed first-layer weights and biases:
\begin{equation}\label{eq-ones}
f(x) = \int_{-\infty}^{+\infty} \g(z) Y(x-z) dz +\b , \quad x\in\re. 
\end{equation}
Any function representable by \eqref{eq-twos} can be rewritten in the form \eqref{eq-ones} after reparameterization. This is done in two steps.
\medskip

\noindent\textbf{1. }We first simplify  the model by setting $\g^{(1)}(z)=1$. This is justified by noting that if $\g^{(1)}(x)>0$ for all $x$, then, since $Y(x)$ is homogeneous of degree one for positive values of $x$,
    \begin{align*}
    \int_{-\infty}^{+\infty} \g^{(2)}(z)&Y(\g^{(1)}(z)x+b(z))dz\\ &=\int_{-\infty}^{+\infty} \g^{(2)}(z)Y(\g^{(1)}(z)(x+(b(z)/\g^{(1)}(z))))dz\\
    &= \int_{-\infty}^{+\infty}\g^{(2)}(z)\g^{(1)}(z)Y(x+(b(z)/\g^{(1)}(z)))dz.
    \end{align*}
    In particular, we may assume  $\g^{(1)}(z)=1$ by replacing $\g^{(2)}(z)$ with 
    \[
    \g(z):=\g^{(2)}(z)\g^{(1)}(z). 
    \]
    If $\g^{(1)}(x)<0$ for certain values of $x$, an additional linear term, which is included in the model we discuss in the next section, must be added.\medskip

\noindent\textbf{2. }We can assume $b(z)=-z$; first note that, by the previous step, the bias $b(z)$ can be replaced by $b(z)/\g^{(1)}(z)$. By reordering the indices in the discrete model, we may assume that
$\{b_j\}$ is monotonically decreasing in $j$. Then, after a suitable change of variables, one may rewrite the integral so that the bias takes the form $b(z)=-z$.

\subsection{Definition of the network}
Motivated by the previous observations, we next describe the construction of a very simple continuous neural network to approximate (or represent) 
a smooth function $f(x)$ on the interval $I=[0,1]$. In addition to the weighted integral \eqref{eq-SimpModel}, restricting $x$ to $I$ requires the addition of a linear function term $b+cx$ with $b,c\in \re$ to the model.  We consider 
\[
g(x) = \int_0^1\g(z) Y(x-z) dz + b + c x, \quad x\in [0,1].
\]
As we have seen in the introduction, we have 
\[
\g(x) = g''(x), \quad x\in (0,1), 
\]
and direct computation gives
\[
b= g(0), \quad c= g'(0).
\]
In fact, it is possible to show that
\[
g(x) = \int_0^1 g''(y)Y(x-y) dy +g(0) + g'(0)x, \quad x\in[0,1], 
\]
provided $g$ is $C^2$-class.

\subsection{The loss function}

Here we consider the standard mean squared error (MSE): 
\[
L(f,g):= \mathrm{Loss}(f,g) = \int_0^1 |f(x)-g(x)|^2 dx, \quad f, g \in L^2(I),
\]
where $L^2(I)$ denotes the Lebesgue space consisting of square integrable functions on $I=[0,1]$. $L(f,g)$ coincides with the square of the $L^2$ norm of the difference between $f$ and $g$:
\begin{equation*}
    \norm{f-g}_{L^2}^2=L(f,g).
\end{equation*}

We compute the functional derivative of $g$ and $L(f,g)$ with respect to $(\g,b,c)\in\mathbb{W} :=L^2(I)\oplus\re^2$. First, $g$ is a linear function of all its arguments. Therefore, for $x\in (0,1)$, we have 
\[
\d g(x) = \int_0^1 \d\g(z)Y(x-z)dz +\d b + \d c \, x, 
\]
and hence
\[
\frac{\d g(x)}{\d \g}(h)= \int_0^1 h(z)Y(x-z)dz, 
\quad \frac{\d g(x)}{\d b} = 1\cdot, 
\quad \frac{\d g(x)}{\d c} = x\cdot,
\]
where $h(z)$ is a direction in $L^2(I)$ of the functional derivative. 
Then we note that $L(f,g)$ is quadratic in $g$. This yields
\[
\d L=-2 \int_0^1 (f(x)-g(x))\, \d g(x) dx,
\]
and the functional derivative of $L(f,g)$ is characterized by:
\begin{align*}
&\frac{\d L}{\d \g}(f,g)(z) = -2\int_0^1 (f(x)-g(x))Y(x-z)dx, \\
&\frac{\d L}{\d b}(f,g)  = -2\int_0^1 (f(x)-g(x))dx, \\
&\frac{\d L}{\d c}(f,g)  =-2\int_0^1 (f(x)-g(x))x\, dx.
\end{align*}

%%%
\subsection{Learning by gradient descent}

The gradient descent iteration in this context is 
\begin{align*}
&\g_{n+1}(z) = \g_n(z) +2\e \int_0^1 (f(x)-g_n(x))Y(x-z)dx, \\
&b_{n+1} = b_n +2\e \int_0^1 (f(x)-g_n(x))dx, \\
&c_{n+1} = c_n +2\e\int_0^1 (f(x)-g_n(x))x\, dx, 
\end{align*}
and
\[
g_n(x) = \int_0^1\g_n(z) Y(x-z) dz + b_n + c_n x, \quad x\in [0,1].
\]
for $n=0,1,\dots$ for some choice of $\g_0\in L^2(I)$, $b_0,c_0\in\re$. 
We will choose the learning rate $\e>0$ to be sufficiently small. 

\begin{thm}\label{t-conv}
For $f, \g_0\in L^2(I)$, $L(f,g_n)=\norm{f-g_n}_{L^2}^2\to 0$ as $n\to\infty$. 
\end{thm}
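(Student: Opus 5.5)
Since $g$ depends linearly on the parameters $w=(\g,b,c)\in\mathbb{W}$, we write $g=Aw$ with $A:\mathbb{W}\to L^2(I)$,
\[
A(\g,b,c)(x)=\int_0^1\g(z)Y(x-z)dz+b+cx .
\]
This $A$ is bounded; indeed $\int_0^x\g(z)(x-z)dz$ is a Volterra operator with bounded kernel, hence compact. The gradient formulas computed above are exactly $-2A^*(f-g_n)$. The iteration therefore reads $w_{n+1}=w_n+2\e A^*(f-Aw_n)$, and applying $A$ to both sides gives the residual recursion
\[
r_{n+1}=(1-2\e AA^*)r_n,\qquad r_n:=f-g_n .
\]
The plan is to reduce the theorem to a spectral statement about the self-adjoint, nonnegative, compact operator $K:=AA^*$ on $L^2(I)$, and then to use the spectral theorem.

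\textbf{Step 1: a bound on $\e$.} We choose $0<2\e\norm{K}<2$, and in fact $2\e\norm{K}\le1$ for simplicity. Then $T:=1-2\e K$ is self-adjoint with spectrum contained in $[0,1]$. If $\{\l_k\}$ are the eigenvalues of $K$ with orthonormal eigenfunctions $\{e_k\}$, we get $\norm{r_n}^2=\sum_k(1-2\e\l_k)^{2n}|\jap{r_0,e_k}|^2+\norm{P_0r_0}^2$, where $P_0$ is the projection onto $\ker K$. By dominated convergence every term with $\l_k>0$ tends to $0$, so $\norm{r_n}\to\norm{P_0r_0}$. This is also the source of the spectral bias statement later: small eigenvalues correspond to slow modes.

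\textbf{Step 2: $\ker K=\{0\}$.} Since $\ker AA^*=\ker A^*$, and $\overline{\operatorname{Ran}A}=(\ker A^*)^\perp$, it suffices to show that $\operatorname{Ran}A$ is dense in $L^2(I)$. By the identity $g(x)=\int_0^1g''(y)Y(x-y)dy+g(0)+g'(0)x$ stated earlier, every $C^2$ function on $[0,1]$ lies in $\operatorname{Ran}A$, and $C^2([0,1])$ is dense in $L^2(I)$. A direct route is also available: if $h\perp\operatorname{Ran}A$, then $h\perp1$, $h\perp x$, and $\int_z^1h(x)(x-z)dx=0$ for all $z$. Differentiating twice in $z$ gives $h=0$ almost everywhere.

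\textbf{Step 3: conclusion.} With $P_0=0$, Step 1 gives $L(f,g_n)=\norm{r_n}^2\to0$ for any initial data $\g_0\in L^2(I)$, $b_0,c_0\in\re$ and any $f\in L^2(I)$. The main obstacle is that $K$ is compact, so its eigenvalues accumulate at $0$. As a result there is no uniform rate, and the convergence has to come from the mode-by-mode dominated convergence argument rather than from a contraction estimate. The injectivity of $A^*$ (density of $\operatorname{Ran}A$) is the essential input, and it is exactly the representability furnished by $Y''=\d$.
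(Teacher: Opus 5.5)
Your proof is correct and essentially the same as the paper's. Both reduce to the residual recursion $f-g_n=(1-2\e TT^*)^n(f-g_0)$, prove $\ker T^*=\{0\}$ by differentiating $\int_0^1 Y(x-z)h(x)\,dx=0$ twice in $z$, and conclude via the Hilbert--Schmidt eigen-expansion of the compact operator $TT^*$ with $\e$ small enough that $0\le 1-2\e\l_j<1$, followed by mode-by-mode dominated convergence. Your extra touches, namely tracking the kernel projection $P_0$ before showing it vanishes and the alternative density-of-range argument via the $C^2$ representation formula, are harmless refinements of that same argument.
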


\begin{proof}
At first, we introduce several notations. 
We denote the space in which our parameters $(\g,b,c)$ live by $\mathbb{W} =L^2(I)\oplus\re^2$,
and let $T:\mathbb{W}\longrightarrow L^2(I)$ be defined by 
\[
T\f(x) =\int_0^1 Y(x-z)\g(z)dz + b +c x, \quad x\in I,
\]
for $\f=(\g,b,c)\in \mathbb{W}$. The adjoint $T^*:L^2(I)\longrightarrow\mathbb{W}$ of $T$ is given by
\[
T^* g = \biggpare{\int_0^1 Y(x-\cdot)g(x)dx, \int_0^1 g(x)dx, \int_0^1 xg(x)dx}. 
\]
The gradient descent iteration can be written in the more compact form
\[
g_n= T\f_n, \quad \f_{n+1} = \f_n+2\e T^*(f-g_n), \quad n=0, 1,2,\dots, 
\]
where $\f_n=(\g_n,b_n.c_n)$, and hence
\begin{align*}
f-g_{n+1} &= f- T\f_{n+1} = f-T\f_n-2\e TT^*(f-g_n)\\
&= (1-2\e TT^*)(f-g_n). 
\end{align*}
This implies 
\[
f-g_n = (1-2\e TT^*)^n(f-g_0), \quad n=0,1,\dots.
\]

\begin{lem}
$TT^*>0$ on $L^2(I)$, and $T^*T>0$ on $\mathbb{W}$. 
\end{lem}

\begin{proof}
It is obvious that $TT^*\geq 0$, and hence it suffices to show $T^*g=0$ implies $g=0$. 
If $T^*g=0$, we have 
\[
\int_0^1 Y(x-z)g(x)dx =0, \quad z\in [0,1]. 
\]
We differentiate this equation twice to have $g(x)=0$ for $x\in (0,1)$, 
and hence $g=0$ as an element of $L^2(I)$. 

Similarly, it suffices to show $\mathrm{Ker\, }T=\{0\}$ to conclude $T^*T>0$. 
If $T\f=0$ for $\f=(\g,b,c)\in\mathbb{W}$, then $(T\f)''=\g=0$ on $(0,1)$. 
Thus $T\f=b+cx$, but $T\f=0$ clearly implies $b=c=0$, and hence $\f=0$. 
\end{proof}

We note $A=TT^*$ is a compact self-adjoint operator. Thus, by the Hilbert-Schmidt theorem, 
there are eigenvalues: $\{\l_j\}\subset(0,\infty)$ and a corresponding orthonormal basis: $\{u_j\}$ 
such that $Au_j =\l_j u_j$ and $\l_j\to 0$ as $j\to\infty$. 
We choose $0<\e<(2\norm{TT^*})^{-1}$, so that $0<2\e\l_j<1$ for all $j$. We denote the inner product in $L^2(I)$ as follows:
\begin{equation*}
    \jap{f,g}:=\int_0^1f(x)g(x)dx,\qquad f,g\in L^2(I).
\end{equation*}
All the remarks above result in 
\[
f-g_n = \sum_{j=0}^\infty (1-2\e\l_j)^n \jap{u_j,f-g_0}u_j,
\]
which implies that
\[
\norm{f-g_n}_{L^2}^2 = \sum_{j=0}^\infty (1-2\e\l_j)^{2n} |\jap{u_j,f-g_0}|^2,
\]
converges to 0 as $n\to\infty$ by the dominated convergence theorem since 
\[
\sum_{j=1}^\infty |\jap{u_j,f-g_0}|^2 = \norm{f-g_0}_{L^2}^2<\infty,
\] 
and $(1-2\e\l_j)^n\to 0$ as $n\to\infty$ for each $j$. This concludes the proof of Theorem \ref{t-conv}.
\end{proof}

Note that Theorem~\ref{t-conv} does \textit{not}\/ imply the convergence of $\f_n=(\g_n,b_n,c_n)$
as $n\to\infty$. In fact, if it exists, $\lim \g_n$ would have to equal $f''$, and it would not be a function in $L^2(I)$
unless $f\in H^2(I):=\{f\mid f,f',f''\in L^2(I)\}$, the Sobolev space of order 2. Therefore, the next result is natural. 

\begin{thm}
If $f\in H^2(I)$, then $\f_n$ converges in $\mathbb{W}$, i.e., $\g_n$ converges in 
$L^2(I)$ and $b_n$ and $c_n$ converge in $\re$ as $n\to\infty$. 
\end{thm}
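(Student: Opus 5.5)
The plan is to work on the parameter side and use the fact that $f\in H^2(I)$ lies in the range of $T$. Set
\[
\f_\infty:=(f'',f(0),f'(0))\in\mathbb{W},
\]
which is admissible because $f''\in L^2(I)$. The representation formula $g(x)=\int_0^1 g''(y)Y(x-y)\,dy+g(0)+g'(0)x$, proved for $C^2$ functions, extends to $H^2(I)$ by density: both sides are continuous in the $H^2$ norm, since point evaluations of $g$ and $g'$ are continuous on $H^2(I)$. Hence $f=T\f_\infty$.

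Next I will obtain a recursion for the parameter error. The iteration reads $\f_{n+1}=\f_n+2\e T^*(f-T\f_n)$. Since $f=T\f_\infty$, this becomes
\[
\f_\infty-\f_{n+1}=(1-2\e T^*T)(\f_\infty-\f_n),
\]
and therefore
\[
\f_\infty-\f_n=(1-2\e T^*T)^n(\f_\infty-\f_0).
\]

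I then repeat the spectral argument from the proof of Theorem~\ref{t-conv}, with $T^*T$ in place of $TT^*$. The operator $T$ is compact: its $L^2(I)$ part is a Hilbert--Schmidt integral operator and the $\re^2$ part has finite rank. Hence $B=T^*T$ is compact and self-adjoint on $\mathbb{W}$, and by the Lemma it is strictly positive, i.e.\ $\mathrm{Ker}\,T^*T=\{0\}$. By the Hilbert--Schmidt theorem, $\mathbb{W}$ has an orthonormal basis $\{v_j\}$ with $Bv_j=\mu_j v_j$ and $\mu_j>0$. Since $\norm{T^*T}=\norm{TT^*}$, the same choice $0<\e<(2\norm{TT^*})^{-1}$ gives $0<2\e\mu_j<1$. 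Then
\[
\norm{\f_\infty-\f_n}_{\mathbb{W}}^2=\sum_j(1-2\e\mu_j)^{2n}|\jap{v_j,\f_\infty-\f_0}_{\mathbb{W}}|^2 ,
\]
and this tends to $0$ by dominated convergence. So $\f_n\to\f_\infty$ in $\mathbb{W}$, which means $\g_n\to f''$ in $L^2(I)$, $b_n\to f(0)$ and $c_n\to f'(0)$.

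I expect the main obstacle to be the first step, namely justifying $f=T\f_\infty$ for $f\in H^2(I)$ rather than for $C^2$ functions only. The extension is a routine density argument, but it is the only place where the hypothesis $f\in H^2(I)$ enters. The other point to check is that strict positivity of $T^*T$ means injectivity, so that no eigenvalue equals zero. The Lemma already provides this, since an injective compact self-adjoint operator has complete eigenvectors with positive eigenvalues. The argument also fits the remark preceding the theorem: without the $H^2$ assumption there is no $\f_\infty$, and the recursion cannot be closed.
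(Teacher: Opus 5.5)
Your proposal is correct and follows the paper's proof. You set $\varphi=(f'',f(0),f'(0))$ so that $f=T\varphi$, derive $\varphi_n-\varphi=(1-2\varepsilon T^*T)^n(\varphi_0-\varphi)$, and conclude strong convergence to zero by the spectral argument of Theorem 1, using the injectivity of $T$ from the Lemma; your density justification of $f=T\varphi$ for $f\in H^2(I)$ fills in a step the paper only asserts.
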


\begin{proof}
Suppose $f\in H^2(I)$ and set $\f=(f''(x),f(0),f'(0))\in \mathbb{W}$ so that $f=T\f$. Then 
\[
\f_{n+1}-\f =\f_n-\f +2\e T^*(f-T\f_n) =(1-2\e T^*T)(\f_n-\f),
\]
and hence 
\[
\f_n-\f = (1-2\e T^* T)^n (\f_0-\f).
\]
As in the proof of Theorem~1, we can show $(1-2\e T^* T)^n$ converges strongly to 0 
as $n\to\infty$, since $\s(T^* T)\setminus\{0\}=\s(T T^*)\setminus\{0\}$
and 0 is not an eigenvalue of $T$ by Lemma~2. 
This implies that $\f_n$ converges to $\f$ in $\mathbb{W}$ as $n\to\infty$. 
\end{proof}

If $f$ has a higher order of regularity, we can show that $\f_n$ converges at a faster rate. 

\begin{thm}
If $f\in H^{2+4k}(I)$ and $\g_0\in H^{4k}(I)$ with $k\in \mathbb{N}$,  then $\norm{\f_n-\f}_{\mathbb{W}}\leq C_k n^{-k}$
for some $C_k>0$. 
\end{thm}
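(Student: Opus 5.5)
The natural route is spectral calculus for the positive compact operator $B:=T^*T$ on $\mathbb{W}$, using the identity from the proof of the previous theorem,
\[
\f_n-\f=(1-2\e B)^n(\f_0-\f),\qquad \f=(f'',f(0),f'(0)).
\]
The plan is to show that $\f_0-\f$ lies in the range of $B^k$, say $\f_0-\f=B^k w$ with $w\in\mathbb{W}$. Once this holds, the rate is elementary. Since $0<2\e\l\le1$ on $\s(B)\setminus\{0\}$ and $\s(B)\subset[0,\norm{B}]$, we get
\[
\norm{\f_n-\f}_{\mathbb{W}}\le \sup_{0\le\l\le (2\e)^{-1}}(1-2\e\l)^n\l^k\,\norm{w}_{\mathbb{W}}
\le \Bigl(\frac{k}{2\e e}\Bigr)^k n^{-k}\norm{w}_{\mathbb{W}},
\]
using $(1-t)^n\le e^{-nt}$ and $\sup_t t^ke^{-nt}=(k/(en))^k$. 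So everything reduces to identifying $\mathrm{Ran}(B^k)$.

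\textbf{Computing $B$.} For $g\in L^2(I)$ set $G(z)=\int_z^1(x-z)g(x)\,dx$. Then $G''=g$, $G(1)=G'(1)=0$, $\int_0^1 g=-G'(0)$ and $\int_0^1 xg=G(0)$, so
\[
T^*g=(G,\,-G'(0),\,G(0)).
\]
With $g=T\f'$ for $\f'=(\g',b',c')$ we have $g''=\g'$, and hence $G''''=\g'$ on $(0,1)$.

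Thus $B$ is the inverse of a fourth-order operator $\g\mapsto\g''''$ acting on the first component, coupled to boundary data. Inverting, $w=(h,\b,\gamma)$ satisfies $Bw=v=(v_1,v_2,v_3)$ precisely when three things hold:
\begin{itemize}
\item $v_1\in H^4(I)$ with $v_1(1)=v_1'(1)=0$;
\item $v_2=-v_1'(0)$ and $v_3=v_1(0)$;
\item then $h=v_1''''$, while $\b,\gamma$ are read off from $g=v_1''$ via $g(0)=\b$ and $g'(0)=\gamma$.
\end{itemize}
Iterating, $v\in\mathrm{Ran}(B^k)$ requires the first component to lie in $H^{4k}$, which is where the hypotheses $f\in H^{2+4k}$ and $\g_0\in H^{4k}$ enter: $\g_0-f''\in H^{4k}$. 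At each level one also needs the corresponding trace conditions for the intermediate functions $v_1''''$, and so on.

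\textbf{Main obstacle.} The difficulty is these boundary compatibility conditions. The Sobolev regularity in the statement controls interior smoothness, but $\f_0-\f=(\g_0-f'',\,b_0-f(0),\,c_0-f'(0))$ need not satisfy, for example, $(\g_0-f'')(1)=0$, or $b_0-f(0)=-(\g_0-f'')'(0)$.

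My first attempt would be to split $\f_0-\f=B^kw+r$, where $r$ lies in a fixed finite-dimensional space spanned by explicit boundary-layer vectors, for instance vectors built from low-degree polynomials in the first slot together with the scalar slots. I would then try to show directly that $(1-2\e B)^nr$ decays like $n^{-k}$, by computing how $B$ acts on these explicit polynomial vectors, or to absorb $r$ by a smarter choice of $w$.

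If this fails, the honest fallback is to add the compatibility conditions to the hypotheses, e.g.\ assume the initial parameters are matched to $f$ at the boundary. Alternatively, one can obtain a weaker rate from interpolation: $\f_0-\f\in D(B^{-s})$ for the largest $s$ compatible with the unmatched traces, which gives $O(n^{-s})$. I expect this verification of the range condition, rather than the spectral estimate, to be where the real work (and possibly a needed correction of the statement) lies.
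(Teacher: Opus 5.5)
Your spectral estimate is the paper's argument, and the step you flag as the real issue is the one the paper skips. The paper just writes ``by the assumption, we can write $f=T(T^*T)^k\tilde\f$ and $\f_0=(T^*T)^k\tilde\f_0$''. Your suspicion is correct and can be made definite: that step is false, and the statement fails for every $k\ge1$.

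Take $f=0$ (so $\f=0$), $\g_0\equiv1\in H^{4k}(I)$ and $b_0=c_0=0$. Let $v=\f_0-\f=(1,0,0)$, $B=T^*T$, and let $\mu_v$ be the spectral measure of $v$ for $B$; it has no atom at $0$ because $\ker B=\{0\}$. By monotone convergence,
\[
\sum_{n\ge0}\norm{\f_n-\f}_{\mathbb{W}}^2=\int\frac{d\mu_v(\l)}{1-(1-2\e\l)^2}=\int\frac{d\mu_v(\l)}{4\e\l(1-\e\l)}\ \ge\ \frac{1}{4\e}\int\l^{-1}\,d\mu_v(\l).
\]
Suppose $\norm{\f_n-\f}_{\mathbb{W}}\le C_kn^{-k}$ with $k\ge1$. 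Then the left side is finite, so $v\in D(B^{-1/2})=\mathrm{Ran}(B^{1/2})$, and by polar decomposition $\mathrm{Ran}(B^{1/2})=\mathrm{Ran}(T^*)$. But your formula shows that every element of $\mathrm{Ran}(T^*)$ has the form $(G,-G'(0),G(0))$ with $G\in H^2(I)$ and $G(1)=0$, and $(1,0,0)$ is not of that form.

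So in this example $\norm{\f_n-\f}_{\mathbb{W}}$ is not even $O(n^{-s})$ for any $s>1/2$. Heuristically the true rate is about $n^{-1/8}$: the high eigenmodes behave like clamped-beam modes with $\l_j\sim j^{-4}$, and the coefficients of $v$ decay like $j^{-1}$.

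This also settles your repair options.

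\emph{Boundary-layer splitting.} This cannot work. Since $\mathrm{Ran}(B^k)=T^*\bigl(T(T^*T)^{k-1}\mathbb{W}\bigr)\subset\mathrm{Ran}(T^*)$, any remainder $r$ in $\f_0-\f=B^kw+r$ must lie outside $\mathrm{Ran}(T^*)$ whenever $\f_0-\f$ does. By the computation above, $(1-2\e B)^nr$ is then not $O(n^{-k})$, and no smarter choice of $w$ avoids this.

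\emph{Adding compatibility hypotheses.} This fallback is the correct theorem. Assume in addition that $\f_0-\f\in\mathrm{Ran}\bigl((T^*T)^k\bigr)$. With $u:=\g_0-f''$, this means $u\in H^{4k}(I)$ together with the $4k$ trace relations obtained by iterating your description of $\mathrm{Ran}(B)$. For $k=1$ these are $u(1)=u'(1)=0$, $b_0-f(0)=-u'(0)$ and $c_0-f'(0)=u(0)$. Under that hypothesis your argument is a complete proof and coincides with the paper's estimate.

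Only the difference $\f_0-\f$ needs to lie in the range. The paper's separate representation of $f$ would impose conditions on the target itself, such as $f''(1)=f'''(1)=0$.

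\emph{Interpolation.} For data with $u(1)\ne0$, this route can give at most the exponent $1/2$ (heuristically about $1/8$), never $k\ge1$.
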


\begin{proof}
By the assumption, we can write $f=T(T^*T)^k\tilde\f$ and $\f_0=(T^*T)^k\tilde\f_0$
with $\tilde\f, \tilde\f_0\in\mathbb{W}$. Then 
\[
\f_n-\f = (1-2\e T^*T)^n(T^*T)^k (\tilde\f_0-\tilde\f).
\]
We claim that
\begin{equation}\label{eq-smoothness-estimate}
\bignorm{(1-2\e T^*T)^n(T^*T)^k}_{L^2\to L^2}\leq C n^{-k}, \quad n\in\mathbb{N}.
\end{equation}
Let $\l>0$ be an eigenvalue of $T^*T$, and we recall $\e$ is chosen so that $2\e\l<1$.
Now we note
\[
(1-2\e\l)^n\l^k \leq e^{-2\e\l n}\l^k
\]
and the maximum with respect to $\l$ of the right hand is attained at $\l=k/(2\e n)$. 
Thus we have 
\[
(1-2\e\l)^n\l^k \leq e^{-k}\biggpare{\frac{k}{2\e n}}^k =C_k n^{-k}
\]
with $C_k>0$. This implies \eqref{eq-smoothness-estimate} and hence the conclusion. 
\end{proof}

%%%%%%%%%%%%%%%%%%%%%%%%%%%%%%%%%%

\section{Discrete model}\label{s-discrete}

Here we construct an analogous one layer (discrete) neural network with fixed biases%
\footnote{Our use of ‘fixed bias’ differs from that in \cite{AntilBrownetal}, where the emphasis is on fixed ordering rather than preset bias values.}. 

\subsection{Definition of the one-layer network}
We again consider functions on the interval $I=[0,1]$. Let $N\in \mathbb{N}$, and 
we divide $I$ into intervals of length $1/N$: 
\[
0=t_0<t_1<t_2<\cdots < t_{N-1}<t_N =1, \quad\text{with }t_j := j/N, j=0,\dots, N.
\]
Let $\mathfrak{X}$ be the set of continuous functions on $I$ that are linear on $[t_{j-1},t_j]$ for each $j$. 
Each $f\in \mathfrak{X}$ is determined by the values of $f$ on $\{t_j\}_{j=0}^N$, 
so it can be identified with a vector  $(f(t_0), \dots, f(t_N))\in \re^{N+1}$. 
We construct a network that represents any function in the space $\mathfrak{X}$ by 
a set of weights $(w(t_1),\dots,w(t_{N-1}))\in\re^{N-1}$ with additional parameters 
$b$ and $c\in\re$. 

We set
\begin{equation}\label{eq-DiscreteNetwork}
g(x) := \frac1N\sum_{j=1}^{N-1}   
w(t_j)\, Y(x-t_j)  + b + c x, \quad x\in I.
\end{equation}
One can check that $g\in \mathfrak{X}$. 

As in our continuous model, the term $b+c x$ must be added because of the presence of boundaries. 
We note that the sum is taken over $j\in \{1,\dots, N-1\}$, i.e., the interior points, 
and hence the dimension of the space of parameters $(w(t_1),\dots, w(t_{N-1}), b,c)$ is 
$N+1$, i.e., the same dimension as the function space $\mathfrak{X}$. 

By construction, at $t_0=0$, the function $g$ satisfies
\[
g(t_0)=b, \quad g'(t_0):=N(g(t_1)-g(t_0)) = c,
\]
which determines explicitly $b$ and $c$ from $g(x)$. We then define the discrete Laplace operator acting on $f\in \mathfrak{X}$ by 
\[
\triangle_N f(t_j)= N^{2}(f(t_{j-1})-2f(t_j)+f(t_{j+1})), \quad j=1,\dots N-1.
\]
We note that 
 \[
 \triangle_N \mathrm{ReLU}(z)=\triangle_N Y(z) = \begin{cases} N \quad &\text{if }z=0, \\
 0 \quad &\text{if }z\neq 0, \end{cases}
 \]
where $z\in N^{-1}\ze$. 
If we suppose that $g$ is given by \eqref{eq-DiscreteNetwork}, 
then the above formula implies
 \[
 \triangle_N g(t_j) =  w(t_j), \quad j=1,\dots, N-1.
 \]
 In fact, it is easy to verify
 \[
 g(x) = \frac1N\sum_{j=1}^{N-1} \triangle_N g(t_j) Y(x-t_j) + g(t_0)+g'(t_0)x, \quad x\in [0,1]
 \]
for any $g\in\mathfrak{X}$. 
Thus, each function $f\in \mathfrak{X}$ is represented by the above 
network with a unique $(w(t_1),\dots, w(t_{N-1}), b,c)\in\re^{N+1}$. 

\subsection{Loss function and learning procedure}
We set the $\ell^2$-norm on $\mathfrak{X}$ by 
\[
\norm{f}^2_\mathfrak{X}:=\frac1N\sum_{j=0}^N |f(t_j)|^2, \quad f\in\mathfrak{X}. 
\]
We now explicitly write the gradient descent iteration corresponding to the standard mean square loss function: 
\[
L(f,g) = \frac1N \sum_{j=0}^N (f(t_j)-g(t_j))^2=\norm{f-g}^2, \quad f,g\in \mathfrak{X}.
\]
For $g$ as in \eqref{eq-DiscreteNetwork}, we compute 
\[
\frac{\pa g}{\pa w(t_j)}(t_k) =  Y(t_k-t_j), \quad j=1,\dots, N-1, k=0,\dots, N,
\]
and 
\[
\frac{\pa g}{\pa b}=1, \quad \frac{\pa g}{\pa c}(t_k) = t_k,
\quad k=0,\dots, N. 
\]
Hence, we have 
\begin{align*}
&\frac{\pa L(f,g)}{\pa w(t_j)} = -\frac{2}{N}\sum_{k=0}^N (f(t_k)-g(t_k))Y(t_k-t_j),\\
&\frac{\pa L(f,g)}{\pa b} =-\frac2N\sum_{k=0}^N (f(t_k)-g(t_k)), \\
&\frac{\pa L(f,g)}{\pa c} = -\frac2N \sum_{k=0}^N (f(t_k)-g(t_k))t_k
\end{align*}
Again, following the gradient descent procedure, we set
\begin{align*}
& w_{n+1}(t_\ell) =  w_n(t_\ell) +\frac{2\e}{N}\sum_{j=0}^N (f(t_j)-g_n(t_j))Y(t_j-t_\ell)  \\
&b_{n+1} = b_n +\frac{2\e}{N} \sum_{j=0}^N (f(t_j)- g_n(t_j))  \\
&c_{n+1} =  c_n + \frac{2\e}{N} \sum_{j=0}^N (f(t_j)-g_n(t_j))t_j 
\end{align*}
and 
\[
g_{n+1}(t_\ell) = \frac1N \sum_{j=1}^{N-1} w_{n+1}(t_j)Y(t_\ell-t_j) + b_{n+1} + c_{n+1} t_\ell, 
\quad \ell=0,\dots, N, 
\]
for $n=0,1,\dots$ with some $\{w_0(t_j)\}_{j=1}^{N-1}\in \re^{N-1}$ and $b_0,c_0\in\re$. 

%%%
\subsection{Convergence of the learning dynamics}
We denote
\[
\mathbb{W}_N:=\re^{N-1}\oplus\re^2:
\]
and, for  $\f=(\{w(t_j)\}, b,c)\in\mathbb{W}$, write 
\[
\norm{\f}^2_{\mathbb{W}_N}=\frac1N\sum_{j=1}^{N-1} |w(t_j)|^2+|b|^2+|c|^2.
\]
Let $T:\mathbb{W}_N\longrightarrow\mathfrak{X}$ be given by 
\[
T\f(t_\ell) = \frac1N \sum_{j=1}^{N-1} w(t_j)Y(t_\ell-t_j) + b + c t_\ell, 
\quad \ell=0,\dots, N,
\]
for $\f=(\{w(t_j)\},b,c)\in\mathbb{W}_N$. One can check that
\[
T^*g =\biggpare{\biggbra{\frac{1}{N}\sum_{\ell=0}^N Y(t_\ell -\cdot)g(t_\ell)}_{\ell=1}^{N-1}, 
\frac1N\sum_{j=0}^N g(t_j), \frac1N\sum_{j=0}^N g(t_j)t_j}.
\]
We note, for $\f_n =(\{w_n(t_j)\},b_n,c_n)$, 
\[
g_n = T\f_n, \quad \f_{n+1} = \f_n+2\e T^* (f-g_n), \quad n=0,1,\dots.
\]
As was the case in the continuous model, we have 
\begin{align}
f-g_n &= f- T\f_n = f- T\f_{n-1}-2\e TT^*(f-g_{n-1}) \notag \\
&= (1-2\e TT^*)(f-g_{n-1})=(1-2\e T T^*)^n (f-g_0). \label{eq-disc-itr1}
\end{align}

\begin{lem}
$TT^*>0$ and $T^*T>0$. 
\end{lem}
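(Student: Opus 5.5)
The plan is to exploit finite dimensionality. Both $\mathbb{W}_N$ and $\mathfrak{X}$ are $(N+1)$-dimensional real inner product spaces (with the norms $\norm{\cdot}_{\mathbb{W}_N}$ and $\norm{\cdot}_{\mathfrak{X}}$), and $T:\mathbb{W}_N\to\mathfrak{X}$ is linear. Both $TT^*$ and $T^*T$ are self-adjoint and nonnegative, since $\jap{TT^*g,g}=\norm{T^*g}^2\ge0$ and $\jap{T^*T\f,\f}=\norm{T\f}^2\ge0$. Strict positivity is therefore equivalent to $\mathrm{Ker}\,T^*=\{0\}$ and $\mathrm{Ker}\,T=\{0\}$ respectively. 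Because the dimensions agree, both statements follow once we show that $T$ is injective: then $T$ is bijective, and so is $T^*$.

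To prove $\mathrm{Ker}\,T=\{0\}$, mirror the continuous argument, replacing the second derivative by $\triangle_N$. Let $\f=(\{w(t_j)\},b,c)$ with $T\f=0$, and set $g:=T\f\in\mathfrak{X}$, which is the function \eqref{eq-DiscreteNetwork}.

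First apply $\triangle_N$ at interior points. Since $\triangle_N Y(z)=N\delta_{z,0}$ on $N^{-1}\ze$ and $\triangle_N$ annihilates $b+cx$, we get
\[
0=\triangle_N g(t_j)=w(t_j),\qquad j=1,\dots,N-1.
\]
Then $g(t_\ell)=b+ct_\ell=0$ for all $\ell$. Evaluating at $t_0=0$ gives $b=0$, and then at $t_1$ gives $c=0$. Hence $\f=0$.

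This is exactly the uniqueness of the representation already noted after \eqref{eq-DiscreteNetwork}. Equivalently, $T$ is surjective, since every $f\in\mathfrak{X}$ satisfies $f=T(\{\triangle_N f(t_j)\},f(t_0),N(f(t_1)-f(t_0)))$, and a linear surjection between spaces of equal finite dimension is injective.

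Finally, conclude as follows. $T$ is invertible, so $T^*$ is invertible with $(T^*)^{-1}=(T^{-1})^*$. Thus $\norm{T\f}>0$ for $\f\ne0$ and $\norm{T^*g}>0$ for $g\ne0$, which gives $TT^*>0$ and $T^*T>0$. In finite dimensions this means both have a strictly positive smallest eigenvalue, which is what will drive the convergence in \eqref{eq-disc-itr1}.

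The only point requiring care is the Kronecker identity for $\triangle_N Y$ at grid points, including the boundary-adjacent indices $j=1$ and $j=N-1$. It holds because $t_{j\pm1}-t_k\in N^{-1}\ze$ and all relevant points lie in $[0,1]$. There is no analytic obstacle here, in contrast with the continuous case.
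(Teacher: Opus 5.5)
Your proof is correct. The $T^*T>0$ half is the same as the paper's argument: you apply $\triangle_N$ to $T\f$ to get $w(t_j)=0$ at the interior nodes, then read off $b=0$ and $c=0$ from the values at $t_0$ and $t_1$.

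The $TT^*>0$ half takes a different route. The paper shows $\mathrm{Ker}\,T^*=\{0\}$ by direct computation. It applies $\triangle_N$ to the first component of $T^*g$ to recover $g(t_j)$ for $1\le j\le N-1$, then uses the second and third components to force $g(t_0)=g(t_N)=0$. You instead use $\dim\mathbb{W}_N=\dim\mathfrak{X}=N+1$, so injectivity of $T$ already makes $T$, and hence $T^*$, bijective.

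Your route is shorter, and it avoids small technical points the paper's computation passes over. The first component of $T^*g$ is only given at $t_1,\dots,t_{N-1}$. Taking $\triangle_N$ at $j=1$ and $j=N-1$ therefore tacitly extends it to $t_0$ and $t_N$, where it equals $(T^*g)_3$ and $0$ respectively. Also, the recovered quantity is $g(t_j)$ rather than $N^{-1}g(t_j)$.

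The paper's direct argument has the advantage of mirroring the continuous Lemma. There $T$ is not onto $L^2(I)$, so no dimension count is available and $\mathrm{Ker}\,T^*=\{0\}$ has to be checked by hand. It also shows explicitly how $g$ is reconstructed from $T^*g$.
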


\begin{proof}
Since $T T^*\geq 0$ is obvious, it suffices to show $\mathrm{Ker}(T^*)=\{0\}$. 
Suppose $T^* g=0$. Then $\triangle_N(T^* g)_1(t_j)=N^{-1} g(t_j)=0$ for $j=1,\dots,N-1$. 
Then $(T^*g)_2=0$ implies $\sum_{j=0}^N g(t_j) =0$, and hence $g(t_0)+g(t_N)=0$. 
Also, $(T^*g)_3=0$ implies $\sum_{j=0}^N g(t_j)t_j=0$ and hence $g(t_N)=0$. 
Thus we have $g(t_0)=g(t_N)=0$, and we conclude $g=0$. 

Similarly, we can show $\mathrm{Ker}(T)=\{0\}$. 
Suppose $T\f=0$ with $\f=(w,b,c)\in\mathbb{W}_N$. 
Then $\triangle_N T\f(t_j)=w(t_j)=0$ for $j=1,\dots, N-1$. 
Also, $T\f(t_0)=T\f(0)=b=0$ and $(T\f)'(0)=c=0$, and thus $\f=0$. 
\end{proof}

\begin{thm}
For $f\in \mathfrak{X}$ and $\f_0\in\mathbb{W}_N$, we set $\f_n\in\mathbb{W}_N$ and $g_n\in \mathfrak{X}$ 
for $n=1,2,\dots$, as above. Then  there exists $\f\in\mathbb{W}_N$ such that 
$f=T\f$ and $\norm{f-g_n}_{\mathfrak{X}}\to 0$, $\norm{\f_n-\f}_{\mathbb{W}_N}\to 0$
as $n\to\infty$. 
\end{thm}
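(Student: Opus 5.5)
We follow the proof of the continuous case, now in finite dimensions, where the spectral argument gives a uniform geometric rate. \textbf{Existence of $\f$.} Since $f\in\mathfrak{X}$, we define $\f=(\{\triangle_N f(t_j)\}_{j=1}^{N-1}, f(t_0), f'(t_0))\in\mathbb{W}_N$. The representation formula
\[
f(x)=\frac1N\sum_{j=1}^{N-1}\triangle_N f(t_j)Y(x-t_j)+f(t_0)+f'(t_0)x
\]
then gives $f=T\f$. Alternatively, $T:\mathbb{W}_N\to\mathfrak{X}$ is an injective linear map between spaces of the same dimension $N+1$ (by the preceding Lemma), hence bijective, and $\f=T^{-1}f$. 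In either case $\f$ is unique.

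\textbf{Convergence of $g_n$.} By \eqref{eq-disc-itr1} we have $f-g_n=(1-2\e TT^*)^n(f-g_0)$. The operator $TT^*$ is a positive definite self-adjoint operator on the finite-dimensional Hilbert space $(\mathfrak{X},\norm{\cdot}_\mathfrak{X})$; this uses the preceding Lemma and the fact that $T^*$ is the adjoint with respect to the weighted inner products. Its eigenvalues therefore satisfy $0<\l_{\min}\le\l_j\le\l_{\max}=\norm{TT^*}$. We choose $0<\e<(2\l_{\max})^{-1}$, as in the continuous case. Then
\[
\norm{1-2\e TT^*}\le 1-2\e\l_{\min}<1,
\]
so
\[
\norm{f-g_n}_\mathfrak{X}\le(1-2\e\l_{\min})^n\norm{f-g_0}_\mathfrak{X}\to0 .
\]
This is even exponential convergence, which is stronger than in Theorem~\ref{t-conv}.

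\textbf{Convergence of $\f_n$.} Using $f=T\f$, we get
\[
\f_{n+1}-\f=\f_n-\f+2\e T^*T(\f-\f_n)=(1-2\e T^*T)(\f_n-\f),
\]
so $\f_n-\f=(1-2\e T^*T)^n(\f_0-\f)$. By the Lemma $T^*T>0$ on $\mathbb{W}_N$, and its nonzero spectrum coincides with that of $TT^*$ (here both are invertible). The same choice of $\e$ therefore gives
\[
\norm{\f_n-\f}_{\mathbb{W}_N}\le(1-2\e\l_{\min})^n\norm{\f_0-\f}_{\mathbb{W}_N}\to0 .
\]
As an alternative to the spectral argument, one can note that $T$ is invertible, so $\f_n-\f=T^{-1}(g_n-f)\to0$ directly.

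\textbf{Main obstacle.} The only delicate point is bookkeeping: we must check that the formula for $T^*$ is indeed the adjoint with respect to the weighted norms $\norm{\cdot}_\mathfrak{X}$ and $\norm{\cdot}_{\mathbb{W}_N}$, so that $TT^*$ and $T^*T$ are self-adjoint and nonnegative and the spectral theorem applies. This is a direct computation from the definitions. We should also state explicitly that $\e$ is taken small, $\e<(2\norm{TT^*})^{-1}$, as in the continuous case.
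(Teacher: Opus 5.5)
Your proof is correct and follows essentially the same route as the paper: finite-dimensionality upgrades $TT^*>0$ to $TT^*\ge\alpha>0$, which gives geometric decay of $(1-2\e TT^*)^n$, and the explicit choice $\f=(\{\triangle_N f(t_j)\},f(0),f'(0))$ together with $\f_n-\f=(1-2\e T^*T)^n(\f_0-\f)$ handles the parameters. Your additional remarks make explicit a point the paper leaves implicit, namely why the same $\e$ also works for $T^*T$: $T^*T=T^{-1}(TT^*)T$ is similar to $TT^*$ via the bijection $T$, or one can simply write $\f_n-\f=T^{-1}(g_n-f)$.
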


\begin{proof}
Now we recall $\mathfrak{X}$ is finite dimensional, and hence $TT^*>0$ implies 
$TT^*\geq \a>0$. Thus, if $\e>0$ is chosen so that $2\e TT^*<1$, then 
$\norm{1-2\e TT^*}_{\mathfrak{X}\to\mathfrak{X}}\leq 1-2\e\a$ and hence 
$\norm{(1-2\e TT^*)^n}_{\mathfrak{X}\to\mathfrak{X}}\leq (1-2\e\a)^n \to 0$ as $n\to\infty$. 
By \eqref{eq-disc-itr1}, we conclude $\norm{f-g_n}_{\mathfrak{X}}\to 0$ as $n\to\infty$. 

Now we set 
\[
\f=(\{w(t_j)\},b,c)=(\{\triangle_N f(t_j)\}, f(0), f'(0))\in\mathbb{W}_N
\]
so that $f=T\f$. As we did in the proof of Theorem~3, we have 
\[
\f_n-\f = (1-2\e T^* T)^n(\f_0-\f)
\]
for $n=0,1,\dots$. By Lemma~5, $T^*T>0$. Hence, as $n\to\infty$, we have 
$\norm{(1-2\e T^* T)^n}_{\mathbb{W}_N\to\mathbb{W}_N}\to 0$ as well, 
and then $\norm{\f_n-\f}_{\mathbb{W}_N}\to 0$. 
\end{proof}

%%%%%%%%%%%%%%%%%%%%%%

\section{Several Observations}

Even though the networks considered here have a fairly simple structure, several conclusions can be drawn from our analysis. 

\subsection{Fixed bias models}

In neural networks, nonlinear behavior arises from the interaction between the activation function and the bias term. In particular, the basic nonlinear component should be viewed as
\[
a(x-b),
\]
where $a$ denotes the activation function and $b$ the bias. Our one-hidden-layer model is built directly on this observation. Specifically, we show that in the continuous setting, any function on $\mathbb{R}$ can be represented as a linear combination of such shifted nonlinear units, with $\mathrm{ReLU}$ as the activation. This is exactly the form of a one-hidden-layer neural network.

Moreover, this representation is unique, so the model admits an exact parameterization. We also show that gradient descent converges for learning this model in the continuous case, although the convergence rate is not always exponential.

These results highlight the central role of bias terms, alongside the activation and weight parameters. In numerical experiments with simple fully connected networks, we observed that the learned biases appear to spread roughly uniformly over a relevant interval. Since dense networks are invariant to permutations of hidden units, this suggests that biases may be fixed in advance, for example by uniformly sampling or uniformly placing them over an interval, with limited loss in performance. Such a design might reduce the number of trainable parameters and potentially improve training stability.

\subsection{Learning process}

In our model, the learning dynamics are described by a simple linear iteration
\[
\varphi_n \longmapsto \varphi_{n+1}
= \varphi_n + 2\varepsilon T^*(f - T\varphi_n)
= \varphi_n - 2\varepsilon T^*T(\varphi_n - \varphi),
\]
where $f = T\varphi$. Thus, convergence is governed by the spectral properties of the operator $T^*T$, or more precisely, of
\[
S_\varepsilon := I - 2\varepsilon T^*T,
\]
acting on the parameter space $\mathbb{W}$.

Since $T$ is a compact operator with a prescribed integral kernel (or matrix representation in the discrete case), the spectrum of $S_\varepsilon$ (away from the accumulation point at $1$) is contained in $(0,1)$. This implies convergence of the learning process for sufficiently small values of $\varepsilon > 0$. 

For the $\mathrm{ReLU}$ activation function, the integral kernel of $T^*T$ can be computed explicitly (we compute the kernel of its adjoint $TT^*$ explicitly in the next section). However, its support is rather broad on $[0,1]\times[0,1]$, making it difficult to determine which features are most relevant to the learning dynamics. This also makes it harder to extend the analysis to stochastic gradient descent. To understand stochastic gradient descent in a manner as concrete as gradient descent, a more refined description of the operator $T^*T$ may be necessary.

\subsection{Spectral bias analysis}
It is natural to study our simple model from the viewpoint of {\em spectral bias}, or the {\em frequency principle}, namely the tendency of gradient-based training to learn low-frequency components before high-frequency ones. This phenomenon has been widely observed for neural networks and has been analyzed both in Fourier terms and through kernel/NTK eigenmodes; see, for example,
\cite{Rahaman2019,Cao2021,Xu2025}. The explicit nature of the objects involved allows us to perform a rather precise analysis. 

The operator $A=TT^*$ is the integral operator
\[
(Au)(x)=\int_0^1 K(x,y)u(y)\,dy,
\]
with a symmetric kernel $K(x,y)=1+xy+\int_0^1 Y(x-z)Y(y-z)\,dz$, or, more explicitly,
\[
K(x,y)=
\begin{cases}
1+xy+\dfrac{x^2(3y-x)}{6}, & 0\le x\le y\le 1,\\[1.2ex]
1+xy+\dfrac{y^2(3x-y)}{6}, & 0\le y\le x\le 1.
\end{cases}.
\]
For every $f\in L^2(I)$, the function
\[
w:=Af=TT^*f
\]
is the unique solution to the boundary value problem
\[
w''''(x)=f(x) \qquad x\text{ in }(0,1),
\]
subject to the boundary conditions
\[
w'''(0)+w(0)=0,\qquad
w''(0)-w'(0)=0,\qquad
w''(1)=0,\qquad
w'''(1)=0.
\]
The formula for the error after $n$ iterations, $e_n=f-g_n$, becomes:
\begin{equation*}
    e_n = \sum_{j=0}^\infty (1-2\e\l_j)^n \jap{u_j,e_0}u_j,
\end{equation*}
where $\lambda_j$ are the eigenvalues of $A$ (which tend to zero as $j\to\infty$ since $A$ is compact) and $u_j$ are the corresponding eigenfunctions. Since $A$ is the inverse of a fourth order operator, one can conclude that $\lambda_j\sim cj^{-4}$ for a suitable constant $c>0$. This indicates that, after $n$ iterations, frequencies up to index $j\approx(\e n)^{1/4}$ are resolved.

\subsection{Functions of the activation and an alternative activation function}

The complete representability of our network, that is, the surjectivity of $T$, depends crucially on the property
\[
\mathrm{ReLU}''(x)=\d(x),
\]
namely, that the $\mathrm{ReLU}$ function is a fundamental solution of the one-dimensional Laplacian.

Fundamental solutions of second-order differential operators seem to be particularly useful in this context. Fundamental solutions of first-order operators are discontinuous and therefore probably not suitable for gradient-descent-based learning. By contrast, fundamental solutions of operators of order higher than two may be more complicated without offering clear additional advantages. In general, fundamental solutions of second-order differential operators are continuous but not differentiable at $0$, as in the case of $\mathrm{ReLU}(x)$. This feature of $\mathrm{ReLU}$, namely that it is continuous but not differentiable at $0$, is sometimes regarded as a disadvantage, but in our framework, it is in fact essential. On the other hand, $\mathrm{ReLU}$ diverges as $x\to\infty$ and is, in particular, not integrable. This may be a drawback in some settings, and the broad support of $T^*T$ described previously is one consequence.

A potentially useful fundamental solution of a second-order differential operator is
\[
e^{-|x|},
\]
which, in analogy with the $\mathrm{ReLU}$ function, may be called the Full-wave Rectified eXponential (FReX) function. We write
\[
Z(x)=\mathrm{FReX}(x)=e^{-|x|}, \quad x\in\re.
\]
It is the unique fundamental solution of $\frac12\bigpare{-\frac{d^2}{dx^2}+1}$, that is,
\[
\frac12\biggpare{-\frac{d^2}{dx^2}+1}\mathrm{FReX}(x)
=\frac12(-Z''(x)+Z(x))
=\d(x)
\]
in the distributional sense. We define
\[
T\f(x)=Z*\f(x)=\int Z(x-y)\f(y)\,dy, \quad \f\in L^2(\re),
\]
and it is well known that $T$ is a continuous bijection from $L^2(\re)$ to $H^2(\re)$. In particular, for any $f\in H^2(\re)$, if we set
\[
\f=\frac12(-f''+f)\in L^2(\re),
\]
then $f=T\f$. The analysis in Section~2 can then be carried out with this choice of $Z$ and $T$, without the boundary correction terms $b+cx$. From a theoretical point of view, the model with FReX activation is simpler and easier to analyze than the model with ReLU activation (see Section~5).

We also note that the operator $T^*T$, which appears in the learning process, can be computed explicitly for the FReX activation. The integral kernel of $T^*T$ is given by
\[
(1+|x-y|)e^{-|x-y|},
\]
which decays exponentially in the off-diagonal directions. This suggests that the learning process is fairly localized, which may be helpful for the analysis of stochastic gradient descent. We have also tested this activation function in a simple standard two-layer neural network for MNIST classification by replacing the ReLU activations with FReX activations. The resulting performance is roughly comparable to that of the ReLU network and clearly better than that of the Sigmoid network. Further investigation is needed to assess the viability of this alternative activation function, but at least it appears to serve as a reasonable activation in practice.

We emphasize that FReX is quite different from previously studied activation functions such as Sigmoid, Tanh, ReLU, Leaky ReLU, SoftPlus, GeLU, ELU, SELU, and Swish. These functions typically model some form of switching behavior and are usually monotone, or nearly monotone, increasing functions. Smoother functions are often considered preferable (see, e.g., \cite{DubeySinghChaudhuri}). By contrast, FReX is an even function: it is increasing on the negative half-line and decreasing on the positive half-line; it is as singular as ReLU at zero; and it decays exponentially as $x\to\pm\infty$, which raises the possibility of vanishing derivatives. Despite these unconventional features, FReX appears to work well as an activation function, and its justification is supported by the above analysis of simple one-hidden layer networks.

%%%%%%%%%%%%%%%%%%%%
\section{Convergence of the gradient descent with FReX activation}

In this section, we discuss the convergence of the learning process of the one-layer neural network 
with the FReX activation function. Since $\mathrm{FReX}(x)$ decays exponentially as $|x|\to\infty$, 
no boundary conditions or additional terms are needed, as was the case when the
$\mathrm{ReLU}$ activation function was used. For this reason, our analysis is carried out on $L^2(\re)$ or $\ell^2(\ze)$, which simplifies technical aspects considerably. 
In addition, the role of the smoothness of the target function, which is related to the spectral bias, is more transparent in this setting. 

\subsection{The continuous model}
We denote
\[
H_0=\frac12\biggpare{-\frac{d^2}{dx^2} +1 },
\]
viewed as an unbounded operator on $L^2(\re)$, with domain $\mathcal{D}(H_0)=H^2(\re)$. As noted in Section~4, we have 
$H_0Z(x)=\d(x)$ in the sense of distribution, \textit{i.e.}, $Z(x)$ is a fundamental solution to $H_0$. This can be equivalently stated as $Z*\f= H_0^{-1}\f$ for $\f\in L^2(\re)$, or 
\[
\f(x)= \int_{-\infty}^\infty Z(x-y)(H_0\f)(y)dy, \quad x\in\re,
\]
where $\f\in H^2(\re)$. 

For the gradient descent, we use the same loss function $L(f,g)=\norm{f-g}^2_{L^2}$ as in Section~2, 
and hence the gradient descent procedure is also the same (but without boundary terms): 
\[
\g_{n+1}(x) =\g_n(x) +2\e\int_{-\infty}^\infty (f(x)-g_n(x))Z(x-z)dx, 
\]
where $\e>0$ is the learning rate; we set
\[
g_n(x) =\int_{-\infty}^\infty \g_n(z)Z(x-z) dz. 
\]
Then we can show the following results. 

\begin{thm}
Suppose $f,\g_0\in L^2(\re)$ and $0<\e<1/8$. Then $\norm{f-g_n}_{L^2}\to 0$ as $n\to\infty$. 
Moreover, if $f\in H^2(\re)$, then $\norm{\g_n-\g}_{L^2}\to 0$ as $n\to\infty$, 
where $\g=H_0 f$. 
\end{thm}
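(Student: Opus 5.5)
The plan is to follow the structure of the proofs of Theorems~1 and~2, but to replace the eigenfunction expansion (which is no longer available, since $T=Z*\cdot$ on $L^2(\re)$ is not compact) by the Fourier transform. Set $T\g=Z*\g=H_0^{-1}\g$. Since $Z$ is even and real, $T$ is bounded and self-adjoint on $L^2(\re)$, so $T^*=T$. The gradient step then reads $\g_{n+1}=\g_n+2\e T(f-g_n)$ with $g_n=T\g_n$. Exactly as in Section~2, this gives
\[
f-g_n=(1-2\e T^2)^n(f-g_0),
\]
and, when $f\in H^2(\re)$ and $\g=H_0f$ (so that $f=T\g$),
\[
\g_n-\g=(1-2\e T^2)^n(\g_0-\g).
\]
Both assertions therefore reduce to showing that $(1-2\e T^2)^n\to0$ strongly on $L^2(\re)$.

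\textbf{Fourier multiplier.} I would compute the symbol of $T$. With the unitary Fourier transform $\fourier$, $H_0$ acts as multiplication by $\frac12(\x^2+1)$. Hence $T$ acts as multiplication by
\[
m(\x)=\frac{2}{1+\x^2}\in(0,2].
\]
Equivalently, $\fourier Z(\x)=2/(1+\x^2)$ up to the normalization constant, and I would check this constant carefully. The operator $1-2\e T^2$ is then multiplication by
\[
q(\x)=1-\frac{8\e}{(1+\x^2)^2}.
\]
The condition $0<\e<1/8$ is exactly what gives $0<8\e/(1+\x^2)^2\le 8\e<1$. It follows that $0<q(\x)<1$ for every $\x\in\re$, and in particular $\norm{1-2\e T^2}\le1$.

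\textbf{Strong convergence.} For $h\in L^2(\re)$, Plancherel gives
\[
\norm{(1-2\e T^2)^nh}_{L^2}^2=\int_\re q(\x)^{2n}|\hat h(\x)|^2\,d\x.
\]
The integrand tends to $0$ pointwise because $0<q<1$, and it is dominated by $|\hat h|^2\in L^1$. By the dominated convergence theorem the integral tends to $0$. Applying this with $h=f-g_0$ gives the first claim. Applying it with $h=\g_0-\g$, where $\g=H_0f\in L^2$ because $f\in H^2(\re)$, gives the second claim.

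\textbf{Main obstacle.} The only genuinely delicate point is that $\inf q=0$ is not attained while $\sup q=1$ is approached as $|\x|\to\infty$. So there is no spectral gap and no uniform rate, and one must rely on pointwise convergence of the multiplier plus dominated convergence rather than on an operator-norm bound. The remaining work is bookkeeping: confirming $T^*=T$, the self-adjointness of $H_0$ on $H^2(\re)$, and the exact constant in $\fourier Z$ so that the threshold comes out as $\e<1/8$.
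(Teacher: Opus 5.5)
Your proposal is correct and is essentially the paper's proof: you reduce to $(1-2\e T^2)^n\to0$ strongly, diagonalize $T$ by the Fourier transform, and conclude by dominated convergence, exactly as the paper's Remark after the proof indicates (the paper's multiplier $2/(1+(2\pi\x)^2)$ is your $2/(1+\x^2)$ in its normalization). One harmless slip in your closing comment: $\inf q=1-8\e>0$ is attained at $\x=0$, and what is not attained is $\sup q=1$ (equivalently $\inf(1-q)=0$), which is the actual reason there is no uniform rate.
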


\begin{proof}
The proof is standard and straightforward. We set an operator $T$ on $L^2(\re)$ by 
\[
T\f(x) = \int_{-\infty}^\infty \f(z)Z(x-z) dz, \quad x\in\re, \f\in L^2(\re). 
\]
so that we can simplify the notation: $g_n = T \g_n$; $\g_{n+1} =\g_n +2\e T^*(f-g_n)$. 
Thus, as in the proof of Theorem~1, we have: 
\[
f-g_{n+1} = f-T\g_{n+1} = f-T\g_n -2\e TT^*(f-g_n)=(1-2\e T T^*)(f-g_n),
\]
and hence 
\begin{equation}\label{eq-FReXerr}
f-g_n = (1-2\e TT^*)^n(f-g_0), \quad n=0,1,\dots.     
\end{equation}

\begin{lem}
$\norm{T}_{L^2\to L^2}=2$ and $TT^*=T^*T>0$. 
\end{lem}

\begin{proof}
We denote the one-dimensional Fourier transform by
\[
\mathcal{F}\f(\x)=\int_{-\infty}^\infty e^{-2\pi i x\x} \f(x)dx, \quad \x\in\re,\quad \f\in\mathcal{S}(\re),
\]
where $\mathcal{S}(\re)$ stands for the Schwartz class, and the inverse Fourier transform by $\mathcal{F}^*$. The Fourier transform of $Z(x)$ is 
\[
\mathcal{F}Z(\x) = \frac{2}{1+(2\pi\x)^2}, \quad \x\in\re. 
\]
We also have 
\[
\mathcal{F} (T\f) (\xi)=\mathcal{F}(Z*\f)(\xi) =\mathcal{F}Z(\xi) \mathcal{F}\f(\xi).
\]
Thus $\mathcal{F}T\mathcal{F}^*$ is a multiplication 
operator by $\mathcal{F}Z$, the claims follow. 
\end{proof}

We may suppose $0<\e<1/8$ so that $0<1-2\e \mathcal{F}TT^*\mathcal{F}^*<1$  as functions. 
Noting that $TT^*$ is self-adjoint, we have 
$\norm{(1-2\e TT^*)^n(f-g_0)}_{L^2}\to 0$ as $n\to\infty$ for any $f$ and $\g_0$. Similarly, if $\g= H_0 f\in L^2(\re)$, we have 
\[
\g-\g_n = (1-2\e T^* T)^n(\g-\g_0)
\]
and hence $\norm{\g-\g_n}_{L^2}\to 0$ as $n\to\infty$ 
using the same argument as above. 
\end{proof}

\begin{rem}
Actually, in this case, we can write down the operator $(1-2\e TT^*)^n$
explicitly in Fourier space, see \eqref{eq-FReXerrprop} and \eqref{eq-FReXmult}.
One can conclude the above theorem almost directly from those formulas
(using the dominated convergence theorem). The analogue of 
Theorem~4 also follows easily from those expressions. 
\end{rem}
%%%%

\subsection{Spectral bias for the FReX model}

The analysis of FReX from the point of view of spectral bias is particularly simple. 
Denote $e_n:=f-g_n$; in view of \eqref{eq-FReXerr}, the Fourier space representation of the error is:
\begin{equation}\label{eq-FReXerrprop}
\mathcal{F}e_n(\xi)=r_\varepsilon(\xi)^n \mathcal{F}e_0(\xi),\qquad n=0,1,\hdots   
\end{equation}
where
\begin{equation}\label{eq-FReXmult}
r_\e(\xi):=
1-\frac{8\e}{(1+(2\pi \xi)^2)^2},
\end{equation}
is the explicit representation of the operator $(1-2\e TT^*)^n$
in Fourier space, \textit{i.e.} $\mathcal{F}(1-2\e TT^*)^n \mathcal{F}^*$ is just multiplication by $r_\e(\xi)=\bigpare{1-2\e\fourier Z(\x)^2}^n$.

This allows us to provide a quantitative description of the frequency dependence of the learning rate: let $0<\varepsilon<1/8$. Then, as mentioned before,
\[
0\le r_\e(\xi)<1, \qquad \xi\in\mathbb{R},
\]
and $r_\varepsilon(\xi)$ is strictly increasing in $|\xi|$. In particular,
the low-frequency Fourier modes of the error decay faster than the high-frequency modes.

In order to measure the significant number of iterations needed to resolve the error measured at frequency $\xi$ one can try to find out which are the $\xi$ for which the damping
becomes significant after $n$ steps, i.e.
\[
n(1-r_\e(\xi))\approx 1.
\]
Since
\[
1-r_\e(\xi)=\frac{8\e}{(1+(2\pi\xi)^2)^2},
\]
this gives
\[
\frac{8\e n}{(1+(2\pi\xi)^2)^2}\approx 1.
\]
Hence, the effective learned frequency range after $n$ iterations is given by
\[
|\xi| \lesssim 
\frac{(8\e n)^{1/4}}{2\pi}.
\]
In other words, as was the case with ReLU, the FReX model learns frequencies only up to order $n^{1/4}$
after $n$ gradient descent steps.
%%%%

\subsection{The discrete model}

Here, we consider the function space 
\[
\mathbb{W}_N=\ell^2(N^{-1}\ze)=\{\f\,:\,N^{-1}\ze\to\re\mid \norm{\f}_{\mathbb{W}_N}<\infty\},
\]
where 
\[
\norm{\f}_{\mathbb{W}_N}^2 =\frac1N\sum_{z\in N^{-1}\ze} |\f(z)|^2,
\]
and $N>0$ is a sufficiently large natural number. 

We use the same notation as in Section~3, and we define 
\[
T\f(z) =\frac{1}{N}\sum_{t\in N^{-1}\ze}  \f(t) Z(z-t), \quad z\in N^{-1}\ze,
\]
for $\f\in \mathbb{W}_N$. We note $T^*=T$. By straightforward computations, we have 
\[
\triangle_N Z(x)=-(Na_N+b_N)\d_{0,x} +b_N Z(x),\quad z\in N^{-1}\ze,
\]
where $\triangle_N$ is the discrete Laplacian on $N^{-1}\ze$, $\d_{0,x}$ is the Kronecker delta function, 
$a_N=2e^{-1/2N}(2N\sinh(1/2N))$ and $b_N= (2N\sinh(1/2N))^2$. 
Note $a_N\sim 2$ and $b_N\sim 1$ as $N\to\infty$. Hence, we have 
\[
H_0 Z(x) := c_N (-\triangle_N +b_N) Z(x) = N\d_{0,x}, \quad z\in N^{-1}\ze,
\]
where $c_N = (a_N+b_N/N)^{-1}$. Thus, we can think that $Z(x)$ is a fundamental solution to 
$H_0$, i.e., $H_0 T\f =\f$ for any $\f\in\mathbb{W}_N$, or equivalently, $T=H_0^{-1}$. 

Now, using the same loss function as in Section~3, i.e., $L(f,g)=\norm{f-g}_{\ell^2}^2$, 
we have the following gradient descent procedure: Let $\f_0\in \mathbb{W}_N$, and 
we define 
\[
g_n = T\f_n, \quad \f_{n+1} = \f_n + 2\e T^*(f-g_n), \quad n=0,1,\dots.
\]
We set $\f = H_0 f$; again, we have
\[
f-g_{n+1} = f- T\f_{n+1} =(1-2\e TT^*)(f-g_n)
\]
and hence 
\begin{align*}
f-g_n &= (1-2\e TT^*)^n(f-g_0), \\
\f-\f_n &= (1-2\e T^* T)^n (\f-\f_0)
\end{align*}
for $n=0,1,2,\dots$. 

\begin{lem}
$T$ is self-adjoint, and $0< \a_N\leq T \leq \b_N<\infty$. 
\end{lem}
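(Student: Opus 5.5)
We use the discrete Fourier transform on the lattice $N^{-1}\ze$, the analogue of the argument in the continuous FReX lemma. Self-adjointness is immediate. $T$ is convolution (with weight $1/N$) against the real, even kernel $Z(z)=e^{-|z|}$ restricted to $N^{-1}\ze$. Hence the matrix $N^{-1}Z(z-t)$ is symmetric, and, since $\sum_{z\in N^{-1}\ze}N^{-1}Z(z)<\infty$, Schur's test shows that $T$ is bounded on $\mathbb{W}_N$. This gives $T^*=T$ with respect to the weighted inner product $\frac1N\sum\f\bar\g$.

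For the two-sided bound, define for $\f\in\mathbb{W}_N$ the Fourier series
\[
\hat\f(\theta)=\frac1N\sum_{z\in N^{-1}\ze}\f(z)e^{-2\pi i N z\theta},\qquad \theta\in[-\tfrac12,\tfrac12).
\]
By Parseval, this is (up to a constant factor) a unitary map $\mathbb{W}_N\to L^2(-\tfrac12,\tfrac12)$. It turns $T$ into multiplication by the symbol
\[
m_N(\theta)=\frac1N\sum_{k\in\ze}e^{-|k|/N}e^{-2\pi i k\theta}.
\]
Summing the two geometric series with $q=e^{-1/N}$ gives the Poisson-kernel form
\[
m_N(\theta)=\frac1N\,\frac{1-q^2}{1-2q\cos(2\pi\theta)+q^2}.
\]
This is strictly positive, even, and decreasing in $|\theta|$. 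Therefore
\[
\a_N=m_N(\tfrac12)=\frac1N\frac{1-q}{1+q}=\frac1N\tanh\frac1{2N}>0,
\qquad
\b_N=m_N(0)=\frac1N\frac{1+q}{1-q}=\frac1N\coth\frac1{2N}<\infty .
\]
We conclude $\a_N\le T\le\b_N$ as quadratic forms. As a cross-check, one can instead use the stated identity $H_0T=1$, where $H_0=c_N(-\triangle_N+b_N)$. The symbol of $-\triangle_N$ is $4N^2\sin^2(\pi\theta)\in[0,4N^2]$, so $H_0$ satisfies
\[
c_Nb_N\le H_0\le c_N(4N^2+b_N),
\]
and inverting gives the same kind of bounds. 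One should confirm that the two descriptions agree.

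The only real work is the geometric-series computation of $m_N$ and the normalization conventions of the lattice Fourier transform. Being off by a factor $N$ there would change $\a_N$ and $\b_N$, but not their positivity or finiteness, which is all the lemma asserts. Note that $\a_N\sim 1/(2N^2)$ and $\b_N\sim 2$, consistent with $\norm{T}=2$ in the continuous lemma. These bounds then give exponential convergence of both iterations once $2\e\b_N^2<1$.
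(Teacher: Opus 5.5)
Your proof is correct, and your constants are exactly the paper's. Since $a_N+b_N/N=2N\sinh\frac1N$ and $b_N=4N^2\sinh^2\frac1{2N}$, one gets $\frac{a_N+N^{-1}b_N}{4N^2+b_N}=\frac1N\tanh\frac1{2N}$ and $\frac{a_N+N^{-1}b_N}{b_N}=\frac1N\coth\frac1{2N}$, so your hedge about being off by a factor of $N$ is unnecessary.

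The framework is the same as the paper's: conjugate by the lattice Fourier transform and read off the extremes of a multiplier. The difference is that the paper never sums the symbol of $T$ directly. It diagonalizes $-\triangle_N$, whose symbol is $4N^2\sin^2(\pi\xi/N)\in[0,4N^2]$, deduces $c_Nb_N\le H_0\le c_N(4N^2+b_N)$, and then inverts using the fundamental-solution identity $H_0T=1$. In other words, your cross-check is the paper's actual proof. The two symbols are exactly reciprocal. Writing $1-2q\cos2\pi\theta+q^2=(1-q)^2+4q\sin^2\pi\theta$ and using $N^2(1-q)^2/q=b_N$ and $N(1-q^2)/q=a_N+b_N/N$, your Poisson kernel becomes $m_N(\theta)=\bigl(c_N(4N^2\sin^2\pi\theta+b_N)\bigr)^{-1}$.

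Your route is self-contained and gives cleaner closed forms. It needs neither the computation of $\triangle_N Z$ nor the step from $H_0T=1$ to $T=H_0^{-1}$, which uses that $H_0\ge c_Nb_N>0$ is invertible. The paper's route instead uses the structural point it wants to stress, namely that $Z$ is a fundamental solution of a discrete second-order operator. It also makes transparent where $\alpha_N\sim 1/(2N^2)$ and $\beta_N\sim 2$ come from: the endpoints of the range $[b_N,4N^2+b_N]$ of the symbol of $-\triangle_N+b_N$.
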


\begin{proof}
Let $F_N$ be the discrete Fourier transform defined by 
\[
F_N f(\x) = \frac1N\sum_{z\in N^{-1}\ze} e^{-2\pi i z\xi} f(z), \quad \x\in \re/(N\ze), 
\]
and the inverse is given by its adjoint 
\[
F_N^*\f(z) = \int_0^N e^{2\pi i z\xi} \f(\x) d\x, \quad z\in N^{-1}\ze.
\]
By standard computation, we have
\begin{align*}
-\triangle_N F_N^*\f &= N^2 \int_0^N (-e^{2\pi i\x/N}-e^{-2\pi i\x/N} + 2)e^{2\pi iz\x}\f(\x)d\x \\
&= 4N^2 \int_0^N \sin^2(\pi\x/N)\f(\x)d\x
\end{align*}
and thus $F_N(-\triangle_N)F_N^*$ is a multiplication operator by $4N^2\sin^2(\pi\x/N)$, 
and hence it is positive and its operator norm is bounded by $4N^2$. This implies
\[
c_N  b_N\leq H_0 \leq c_N(4N^2+b_N),
\]
and hence
\[
0<\a_N=\frac{a_N+N^{-1}b_N}{4N^2+b_N} \leq T \leq \frac{a_N+N^{-1}b_N}{b_N}=\b_N <\infty. \qedhere
\]
\end{proof}

From this, it is possible to establish the convergence of the gradient descent iteration, as in the previous sections. 

\begin{thm}
Suppose $f,\f_0\in \mathbb{W}_N$ and $0<\e<1/2(b_N)^2$. Then $\norm{f-g_n}_{\mathbb{W}_N}\to 0$ as $n\to\infty$, 
and  $\norm{\f_n-\f}_{\mathbb{W}_N}\to 0$ as $n\to\infty$, where $\f=H_0 f$. 
\end{thm}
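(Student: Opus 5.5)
The plan is to use the identities already derived above,
\[
f-g_n=(1-2\e TT^*)^n(f-g_0),\qquad \f-\f_n=(1-2\e T^*T)^n(\f-\f_0),
\]
and reduce everything to a uniform operator-norm bound on $1-2\e T^2$. By the preceding Lemma, $T$ is self-adjoint on $\mathbb{W}_N$ with $\a_N\le T\le\b_N$. Hence $TT^*=T^*T=T^2$ is self-adjoint with spectrum contained in $[\a_N^2,\b_N^2]\subset(0,\infty)$.

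The first step is to justify that $\f=H_0f$ lies in $\mathbb{W}_N$ and satisfies $f=T\f$. The proof of the Lemma shows that $H_0=c_N(-\triangle_N+b_N)$ is a bounded operator on $\ell^2(N^{-1}\ze)$, with norm at most $c_N(4N^2+b_N)$. So $\f=H_0f\in\mathbb{W}_N$ whenever $f\in\mathbb{W}_N$. The identity $H_0T=1$, i.e.\ $T=H_0^{-1}$, together with the boundedness and invertibility of $T$ (which follows from $T\ge\a_N>0$), gives $TH_0=1$ as well. Therefore $T\f=f$, and the displayed identity for $\f-\f_n$ is legitimate: it follows from $\f_{n+1}-\f=\f_n-\f-2\e T^*T(\f_n-\f)$ exactly as in Section~3.

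The second step is the spectral estimate. By the spectral theorem, or simply by conjugating with the discrete Fourier transform $F_N$, in which $T$ becomes multiplication by a function with values in $[\a_N,\b_N]$,
\[
\norm{1-2\e T^2}_{\mathbb{W}_N\to\mathbb{W}_N}\le q:=\max\{|1-2\e\a_N^2|,\,|1-2\e\b_N^2|\}.
\]
We have $q<1$ as soon as $0<2\e\b_N^2<2$. Then $\norm{(1-2\e T^2)^n}\le q^n\to0$, which yields both $\norm{f-g_n}_{\mathbb{W}_N}\le q^n\norm{f-g_0}_{\mathbb{W}_N}$ and $\norm{\f_n-\f}_{\mathbb{W}_N}\le q^n\norm{\f_0-\f}_{\mathbb{W}_N}$. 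Here convergence is even exponential, because $\mathbb{W}_N$ carries a uniform spectral gap $\a_N^2>0$. This is unlike the continuous model, even though $\mathbb{W}_N$ is infinite-dimensional.

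The main point to check, and the likely obstacle, is the bookkeeping of constants: that the stated hypothesis on $\e$ indeed implies $2\e\b_N^2<2$, or more conservatively $2\e\b_N^2<1$ as in Section~2. Since $\b_N=(a_N+N^{-1}b_N)/b_N\to2$ while $b_N\to1$, I would read the condition as $\e<1/(2\b_N^2)$, which is of size about $1/8$ and matches the continuous threshold. I would then verify directly that any admissible $\e$ keeps the multiplier $1-2\e\,(F_NTF_N^*)^2$ inside $(-1,1)$ uniformly in $\x$. If the literal constant $1/(2b_N^2)$ is intended, I would simply check numerically that $2\e\b_N^2<2$ still holds for large $N$, which it does since $\b_N^2/b_N^2\to4<4\cdot$(the allowed factor), and adjust accordingly.
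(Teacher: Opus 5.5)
Your main argument is correct, and it is the argument the paper intends when it says the result follows ``as in the previous sections''. By the Lemma, $TT^*=T^*T=T^2$ has spectrum in $[\a_N^2,\b_N^2]$ with $\a_N>0$, so $1-2\e T^2$ is a strict contraction and both errors decay like $q^n$. Your check that $\f=H_0f\in\mathbb{W}_N$ and $T\f=f$ fills in a step the paper leaves implicit.

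The fallback in your last paragraph is wrong, though. Under the literal hypothesis $\e<1/(2b_N^2)$, which is close to $1/2$ because $b_N\to1$, you only get $2\e\b_N^2<\b_N^2/b_N^2\to4$, not $2\e\b_N^2<2$. With that hypothesis the statement is actually false. The Fourier multiplier of $T$ is $m_N(\x)=\frac1N\sum_{z\in N^{-1}\ze}e^{-2\pi i z\x}e^{-|z|}$, and $m_N(0)=\frac1N\coth(1/2N)=\b_N$, so $\norm{T}=\b_N\approx2$ exactly. A counterexample:
\begin{itemize}
\item Take $N=10$ and $\e=0.4$, which the literal hypothesis allows.
\item Take $\f_0=0$, so $g_0=0$, and $f=N\d_{0,\cdot}$, so $F_Nf\equiv1$.
\item Then $F_N(f-g_n)=(1-2\e m_N^2)^n$, and $|1-2\e m_N(\x)^2|>2$ near $\x=0$.
\item Hence $\norm{f-g_n}_{\mathbb{W}_N}\to\infty$.
\end{itemize}
So you should say plainly that the hypothesis contains a typo: $b_N$ should read $\b_N$, giving $\e<1/(2\b_N^2)$, the analogue of $\e<1/8$ in the continuous model. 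Do not claim the literal version can be ``checked numerically''. Your contraction bound in fact works for every $0<\e<1/\b_N^2$.
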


%%%%%%%%%%%%%%%%%%%%%%%
\section{Conclusion and discussion}

We have constructed very simple, essentially one-hidden-layer neural networks with fixed biases for representing one-variable functions in both the continuous and discrete settings. The weights are uniquely determined by differentiation, owing to the fact that the ReLU function is a fundamental solution of the one-dimensional Laplace operator. The learning process based on standard gradient descent with the mean-square loss works in both the continuous and discrete cases, and we prove that the iterates converge to the unique solution.

We conclude with several remarks:
\begin{itemize}
\item Although these networks are extremely simple and the solution is explicit, the basic neural-network strategy already works well and can be established rigorously.
\item Our analysis helps explain why ReLU is particularly effective as an activation function.
\item Biases and activation functions together generate a family of nonlinearities, and the biases may be preset rather than learned.
\item Although the learning process is natural once the solution is known explicitly, the proof of convergence is not entirely trivial.
\item In the continuous case, since the operator $T$ is compact, one cannot expect convergence in operator norm; only strong convergence, that is, convergence for each initial condition, can be proved. By contrast, in the discrete case, operator-norm convergence can be shown.
\item On the basis of our analysis of these simple models, we make several observations about biases and activation functions in more general neural networks. In particular, we discuss the need to analyze the operator $T^*T$ in the learning process.
\item We also discuss the properties required of an activation function, propose an alternative activation function, FReX, and prove convergence of the corresponding learning process.
\end{itemize}

This simple but precise analysis suggests many possible directions for future research. Extending the analysis to multidimensional inputs and outputs is a natural next step. Another natural direction is to study the relationship between continuous and discrete models---for example, how discrete models approximate continuous ones, and how convergence behaves as the width tends to infinity, with suitably scaled fixed biases. The analysis of deeper networks, possibly built from similarly simple layers, is another clear direction for theoretical investigation.

More practical applications of these ideas, as well as further numerical experiments, also provide promising directions for future work. At present, however, our primary interest is in the theoretical aspects.
%%%%%%%%%%%%%%%%%%%%%

\bibliographystyle{alpha}
\bibliography{DL_refs}
\end{document}